\documentclass[10pt,twocolumn,twoside]{IEEEtran}
\usepackage{amsopn}
\usepackage{amsthm}
%\IEEEoverridecommandlockouts
\usepackage{etex}
\usepackage{spconf,graphicx}
\usepackage{endnotes}
\usepackage{epsfig,psfrag}
\usepackage{pst-all}
\usepackage{amssymb,amsfonts,upref,cite,epsf,color,bm}
\usepackage{color}
\usepackage{url}
\usepackage{amsmath}
\usepackage{graphicx}
\usepackage{subfig}
\usepackage{pstricks}
\usepackage{calc}
\usepackage{booktabs}
\usepackage{tikz}
\usepackage{pgfplots}
\newcommand\defeq{:=}

\usepackage{algorithm}% http://ctan.org/pkg/algorithms
\usepackage{algpseudocode}% http://ctan.org/pkg/algorithmicx
\floatname{algorithm}{Algorithm}
\algnewcommand\algorithmicinput{\textbf{Input:}}
\algnewcommand\INPUT{\item[\algorithmicinput]}
\algnewcommand\algorithmicoutput{\textbf{Output:}}
\algnewcommand\OUTPUT{\item[\algorithmicoutput]}

\DeclareMathOperator*{\argmin}{arg\;min}

\newcommand\vect[1]{\mathbf #1}
\newcommand{\xsig}{x[\cdot]}  
\newcommand{\xsigval}[1]{x[#1]}

\newcommand{\vx}{\vect{x}}

\newcommand{\mW}{\mathbf{W}}
\newcommand{\measlen}{M}

\newcommand{\signalsize}{N}
\newcommand{\siglen}{N}

\newcommand{\graphsigs}{\mathbb{R}^{\mathcal{V}}} 

\newcommand{\edgesupport}{\mathcal{S}}
\newcommand{\edges}{\mathcal{E}}
\newcommand{\cluster}{\mathcal{C}}
\newcommand{\partition}{\mathcal{F}}
\newcommand{\nodes}{\mathcal{V}}
\newcommand{\graph}{\mathcal{G}}
\newcommand{\samplingset}{\mathcal{M}}
\newcommand{\clusteredsig}{x_{c}[\cdot]}

\newtheorem{theorem}{Theorem}%[section]
\newtheorem{definition}[theorem]{Definition}
\newtheorem{lemma}[theorem]{Lemma}

%\newtheorem{proposition}[theorem]{Proposition}
%\newtheorem{postulate}[theorem]{Postulate}

%\newtheorem{algorithm}{Algorithm}
 
%\onecolumn

\newcommand{\edge}[2]{\{#1,#2\}}

\usepackage{setspace}

\title{The Network Nullspace Property for Compressed Sensing of Big Data over Networks}
\name{Alexander Jung$^{1}$, Madelon Hulsebos$^{2}$}
\address{\normalsize $^1$Department of Computer Science, Aalto University, Finland; firstname.lastname(at)aalto.fi \\[-0.5mm]
               \normalsize $^2$Department of Computer Science, Delft University of Technology, Delft, The Netherlands
}

\begin{document}
%\thanks{\hspace*{-5mm}The work of ??? was supported by ???.} 
	\maketitle
\begin{abstract}
We present a novel condition, which we term the network nullspace property, which ensures 
accurate recovery of graph signals representing massive network-structured datasets 
from few signal values. 
%r convex optimization methods to accurately 
%learn labels which form smooth graph signals. %recover clustered graph signals. 
The network nullspace property couples the cluster structure of the underlying 
network-structure  with the geometry of the sampling set. Our results can be used 
to design efficient sampling strategies based on the network topology. 
% suggest 
%how to design the  important application of our 
%results is the design of efficient sampling strategies based which is guided by the 
%network topology. 
%in order to obtain the most information about the entire graph signal. 
% Assuming the true underlying graph signal to be constant over well connected subset of nodes (clusters), 
%we derive a sufficient condition on the sampling set and network structure 
%such that the proposed convex method is accurate. 
%This condition, which we coin the \emph{network nullspace property}, 
%characterizes which nodes of the graph should be sampled 
%in order to retain the full information about the underlying graph signal. 
\end{abstract}

\begin{keywords} compressed sensing, 
%label propagation, 
%cosparse analysis model, 
big data, 
semi-supervised learning, 
complex networks, 
convex optimization %, nullspace property
% , subsampling
\end{keywords} 

% no keywords
% \vspace*{-1mm}
%%%%%%%%%%%%%%%%%%%%%%%%%%%%%%%%%%%
\section{Introduction}
 \label{sec_intro}

%A multitude of convex optimization methods have been recently proposed for machine learning based on 
%massive network-structure datasets (``big data over networks'') \cite{Zhu02learningfrom,SharpnackJMLR2012,TrendGraph,JungSpawc2016,HannakAsilomar2016,JungHero2016}. 
%While most work focused on efficient implementation of those learning methods, the characterization 
%of conditions which guarantee those methods to be accurate has received little attention so far. 

A recent line of work proposed efficient convex optimization methods for recovering graph signals 
which represent label information of network structured datasets (cf. \cite{JungSpawc2016,HannakAsilomar2016}). 
These methods rest on the hypothesis that the true underlying graph signal is nearly constant over well-connected 
subsets of nodes (clusters). 

In this paper, we introduce a novel recovery condition, termed the network nullspace property (NNSP), which 
guarantees convex optimization to accuratetly recovery of clustered (``piece-wise constant'') graph signals from knowledge of its 
values on a small subset of sampled nodes. The NNSP couples the clustering structure of the underlying data graph to 
the locations of the sampled nodes via interpreting the underlying graph as a flow network. 

The presented results apply to an arbitrary partitioning, but are most useful for a partitioning such that 
nodes in the same cluster are connected with edges of relatively large weights, whereas edges between clusters have low weights. 
Our analysis reveals that if cluster boundaries are well-connected (in a sense made precise) 
to the sampled nodes, then accurate recovery of clustered graph signals is possible by solving 
a convex optimization problem. 

Most of the existing work on graph signal processing applies spectral graph theory to define a notion 
of band-limited graph signals, e.g.\ based on principal subspaces of the graph Laplacian matrix, 
as well as sufficient conditions for recoverability, i.e., sampling theorems, for those signals \cite{MaruesSeg2016,ChenVarma2015}. 
In contrast, our approach does not rely on spectral graph theory, but involves structural (connectivity) 
properties of the underlying data graph. Moreover, we consider signal models which amount to 
clustered (piece-wise constant) graph signals. These models, which have also been used in \cite{ChenVarma2016} 
for approximating graph signals arising in various applications. 

The closest to our work is \cite{SharpnackJMLR2012,TrendGraph}, 
which provide sufficient conditions such that a variant of the Lasso method (network Lasso) 
accurately recovers clustered graph signals from noisy observations. However, in contrast to 
this line of work, we assume the graph signal values are only observed on a small subset of nodes. 

The NNSP is closely related to the network compatibility (NCC)
condition, which has been introduced by a subset of the authors in \cite{NlassoFrontiers2018} for analyzing 
the accuracy of network Lasso. The NCC is a stronger condition in the sense that once the NCC is satisfied, 
the NNSP is also guaranteed to hold. 

\section{Problem Formulation}
\label{sec_setup}

Many important applications involve massive heterogeneous datasets comprised 
heterogeneous data chunks, e.g., mixtures of audio, video and text data \cite{BigDataNetworksBook}. 
Moreover, datasets typically contain mostly unlabeled data points; only a small fraction 
is labeled data. An efficient strategy to handle such heterogenous datasets is to organize 
them as a network or data graph whose nodes represent individual data points. 

%Many applications naturally suggest a notion of similarity between individual 
%data points, e.g., the profiles of befriended social network users or greyscale 
%values of neighbouring image pixels. Within image processing, we can represent $2$D images 
%by grid graphs with a particular node $i \!\in\! \nodes$ representing a (super-)pixel \cite{JungSpawc2016,HannakAsilomar2016}. 
%\vspace*{-3mm}
\subsection{Graph Signal Representation of Data}
%\vspace*{-1mm}

In what follows we consider datasets which are represented by a weighted data graph $\graph\!=\!(\nodes,\edges,\mathbf{W})$ 
with nodes $\nodes\!=\!\{1,\ldots,\signalsize\}$, each node representing an individual data point. These nodes are connected 
by edges $\edge{i}{j}\!\in\!\edges$. 
In particular, given some application-specific notion of similarity, the edges of the data graph $\graph$ connect similar 
data points $i,j \in \nodes$ by an edge $\edge{i}{j}\!\in\!\edges$. 
In some applications it is possible to quantify the extent to which data points 
are similar, e.g., via the distance between sensors in a wireless sensor network \cite{zhao2004wsn}. 
Given two similar data points $i,j\!\in\!\nodes$, we quantify the strength 
of their connection $\edge{i}{j} \in \edges$ by a non-negative edge weight $W_{i,j}\!\geq\!0$ which we collect 
in the symmetric weight matrix $\mathbf{W} \in \mathbb{R}_{+}^{\signalsize \times \signalsize}$.  
%Given a subset $\edgeset \!\subseteq\! \edges$ of edges, by a slight abuse of notation, we use the same 
%symbol $\edgeset$ to indicate the subset of nodes connected by the edges in $\edgeset$. 

In what follows we will silently assume that the data graph $\graph$ is oriented by declaring for each 
edge $\edge{i}{j} \in \edges$ one node as the head $e^{+}$ and the other node as the tail $e^{-}$. 
For the oriented data graph we define the directed neighbourhoods of a node $i\!\in\!\nodes$ 
as $\mathcal{N}^{+}(i)\!\defeq\!\{ j\!\in\!\mathcal{N}(i)\!:\! e=\edge{i}{j} \in \edges \mbox{, and } i=e^{+}\}$ and 
$\mathcal{N}^{-}(i)\!\defeq\!\{ j\!\in\!\mathcal{N}(i)\!:\! e=\edge{i}{j} \in \edges \mbox{, and } i=e^{-}\}$. 
We highlight that the orientation of the data graph $\graph$ is not related to any intrinsic property of the 
underlying data set. In particular, the weight matrix $\mathbf{W}$ is symmetric since the weights $W_{i,j}$ 
are associated with undirected edges $\{i,j\} \in \edges$. However, using an (arbitrary but fixed) orientation 
of the data graph will be notationally convenient in order to formulate our main results. 

Beside the edges structure $\edges$, network-structured datasets typically also carry label information 
which induces a graph signal defined over $\graph$. We define a graph signal $x[\cdot]$ 
over the graph $\graph=(\nodes,\edges,\mW)$ as a mapping $\nodes \rightarrow \mathbb{R}$, which associates 
(labels) every node $i\!\in\!\nodes$ with the signal value $x[i] \!\in\! \mathbb{R}$. 
In a supervised machine learning application, the signal values $x[i]$ might represent 
class membership in a classification problem or the target (output) value in a regression problem. 
We denote the space of all graph signals, which is also known as the vertex space (cf.\ \cite{DiestelGT}), 
by $\graphsigs$. 

\vspace{-0.2cm}
\subsection{Graph Signal Recovery}
\label{equ_gsr_sec}
\vspace*{-1mm}

We aim at recovering (learning) a graph signal $x[\cdot] \in \graphsigs$ defined over 
the data graph $\graph$, from observing its values $\{ x[i] \}_{i \in \samplingset}$ 
on a (small) sampling set $\samplingset\defeq\{i_{1},\ldots,i_{\measlen}\} \subseteq \nodes$, where typically $\measlen \ll \signalsize$. 

The recovery of the entire graph signal $\xsig$ from the incomplete information provided by 
the signal samples $\{x[i]\}_{i \in \samplingset}$ is possible under a smoothness assumption, 
which is also underlying many supervised machine learning methods \cite{SemiSupervisedBook}. 
This smoothness assumption requires the signal values or labels of data points which are close,  
with respect to the data graph topology, to be similar. 
More formally, we expect the underlying graph signal $\vx[\cdot] \in \graphsigs$ 
to have a relatively small \emph{total variation} (TV) 
\begin{equation} 
\| x[\cdot] \|_{\rm TV} \defeq  \sum_{ \edge{i}{j} \in \edges} W_{i,j}  | x[i]\!-\!x[j]|. \nonumber 
\end{equation}
The total variation of the graph signal $x[\cdot]$ obtained  
over a subset $\mathcal{S}\subseteq \edges$ of edges is denoted $\| x[\cdot] \|_{\mathcal{S}} \defeq \sum_{\edge{i}{j} \in \mathcal{S}} W_{i,j}  | x[j]\!-\!x[i]|$.

Some well-known examples of smooth graph signals include low-pass signals in 
digital signal processing where time samples at adjacent time instants are strongly 
correlated and  close-by pixels in images tend to be coloured likely. The class of graph signals 
with a small total variation are sparse in the sense of changing significantly over few edges only. 
In particular, if we stack the signal differences $x[i]-x[j]$ (across the edges $\edge{i}{j} \in \edges$) into a big 
vector of size $|\edges|$, then this vector is sparse in the ordinary sense of having 
only few significantly large entries \cite{Don06}. 

In order to recover a signal with small TV $\| x[\cdot] \|_{\rm TV}$,  
from its signal values $\{ x[i]\}_{i \in \samplingset}$, 
a natural strategy is %the optimization problem  
\begin{align} 
\hat{x}[\cdot] & \!\in\! \argmin_{\tilde{x}[\cdot] \in \graphsigs} \| \tilde{x}[\cdot] \|_{\rm TV} 
\hspace*{1mm} \mbox{s.t.} \hspace*{1mm} \tilde{x}[i]\!=\!x[i] \mbox{ for all } i\!\in\!\samplingset.  \label{equ_semi_sup_learning_problem}
\end{align}
%As the notation indicates, there might be multiple solutions $\hat{x}[\cdot]$ for the learning 
%problem \eqref{equ_semi_sup_learning_problem}. However, any solution 
%$\hat{x}[\cdot]$ of \eqref{equ_semi_sup_learning_problem} 
%is characterized by: (i) it is consistent with the observed samples $\{x[i]\}_{i \in \samplingset}$ 
%and (ii) it has minimum TV among all such graph signals. 
There exist highly efficient methods for solving convex optimization problems of 
the form  \eqref{equ_semi_sup_learning_problem} 
(cf.\ \cite{ZhuAugADMM,JungHero2016,pock_chambolle} and the references therein). 
%In particular, the application of the primal-dual method by Pock and Chambolle 
%\cite{} results in a highly scalable sparse label propagation (SLP) algorithm \cite{}. 

%\vspace*{-3mm}
\section{Recovery Conditions} 
\label{sec_main_results} 
%\vspace*{-1mm}

The accuracy of any learning method based on solving  \eqref{equ_semi_sup_learning_problem} 
depends on the deviations between the solutions $\hat{x}[\cdot]$ of the optimization problem \eqref{equ_semi_sup_learning_problem} 
and the true underlying graph signal $x[\cdot] \in \graphsigs$. 
In what follows, we introduce the network nullspace condition as a sufficient condition on 
the sampling set and graph topology such that any solution $\hat{x}[\cdot]$ of \eqref{equ_semi_sup_learning_problem} 
accurately resembles an underlying clustered (piece-wise constant) graph signal of the form (cf. \cite{ChenVarma2016}) 
\begin{equation}
\label{equ_def_clustered_signal_model}
x[i] = \sum_{\cluster \in \partition} a_{\cluster} \mathcal{I}_{\cluster}[i] \quad\quad \mbox{with } 
\mathcal{I}_{\cluster}[i] \defeq \begin{cases} 1 \mbox{ for } i \in \cluster \\ 0 \mbox{ else.}  \end{cases}
\end{equation}

The signal model \eqref{equ_def_clustered_signal_model} is defined using a fixed partition $\partition= \{ \cluster_{1},\ldots,\cluster_{|\partition|} \}$ of the entire data 
graph $\graph$ into disjoint clusters $\cluster_{l} \subseteq \nodes$. 
The signal model \eqref{equ_def_clustered_signal_model} has been studied in \cite{ChenVarma2016}, 
where it was demonstrated that it allows, compared to band-limited graph signal models, 
for more accurate approximation of datasets obtained from weather stations.

While our analysis applies to an arbitrary partition $\partition$, our results are most useful for partitions 
where the nodes within clusters $\cluster_{l}$ are connected by many edges with large weight, while nodes of different 
clusters are loosely connected by few edges with small weights. Such reasonable partitions can be obtained 
by one of the recently proposed highly scalable clustering methods, e.g., \cite{Spielman_alocal,Fortunato2009}. 

We highlight that the knowledge of the partition is only required for the analysis of signal recovery methods 
(such as sparse label propagation \cite{JungHero2016}), which are based on solving the recovery problem 
\eqref{equ_semi_sup_learning_problem}, However, for the actual implementation of those methods, 
as the recovery problem \eqref{equ_semi_sup_learning_problem} 
itself does not involve the partition.  

We will characterize a partition $\partition$ by its boundary 
\begin{equation}
\label{equ_def_boundary}
\partial \partition \!\defeq\! \{ \{i,j\} \!\in\! \edges\!:\! i \in \cluster_{l}, j \in \cluster_{l'} \mbox{, with } l \!\neq\!l' \} \subseteq \edges, 
\end{equation}
which is the set of edges connecting nodes from different clusters. 
We highlight that the recovery problem \ref{equ_semi_sup_learning_problem} 
does not require knowledge of the partition $\partition$. 
Rather, the partition $\partition$ and corresponding signal model 
\eqref{equ_def_clustered_signal_model} is only used for analyzing the solutions of \eqref{equ_semi_sup_learning_problem}. 

\subsection{Network Nullspace Property}
Consider a clustered graph signal $\xsig \in \graphsigs$ of the form \eqref{equ_def_clustered_signal_model}. We observe its values $x[i]$ at 
the sampled nodes $i \in \samplingset$ only. In order to have any chance for recovering the complete signal only from 
the samples $\{ x[i] \}_{i \in \samplingset}$ we have to restrict the nullspace of the sampling set, which we define as 
\begin{equation}
\label{equ_def_kernel_sampling_set}
\mathcal{K}(\samplingset) \defeq \{ \tilde{x}[\cdot] \in \graphsigs :  \tilde{x}[i] = 0 \mbox{ for all } i \in \samplingset \}. 
\end{equation}
Thus, the nullspace $\mathcal{K}(\samplingset)$ contains exactly those graph signals which vanish at all nodes of the sampling set $\samplingset$. 
Clearly, we have no chance in recovering any signal $\hat{x}[\cdot]$ which belongs to the nullspace $\mathcal{K}(\samplingset)$ 
as it can not be distinguished from the all-zero signal $\tilde{x}[i]=0$, for all nodes $i\!\in\!\nodes$, 
which result in exactly the same (vanishing) measurements $\tilde{x}[i]\!=\!\hat{x}[i]\!=\!0$ for all $i\!\in\!\samplingset\subseteq\nodes$.   

In order to define the network nullspace property which characterizes the solutions of the recovery 
problem \eqref{equ_semi_sup_learning_problem}, we need the notion of a \emph{flow with demands} \cite{KleinbergTardos2006}. 
\begin{definition} 
Given a graph $\graph\!=\!(\nodes,\edges,\mathbf{W})$, a flow with demands $g[i]\!\in\!\mathbb{R}$, for $i\!\in\!\nodes$, is a mapping 
$f[\cdot]: \edges \rightarrow \mathbb{R}_{+}$ satisfying the conservation law 
%\vspace*{-2mm}
\begin{equation}
\label{equ_define_demand_now}
\sum_{j \in \mathcal{N}^{+}(i)}\hspace*{-2mm} f[\edge{i}{j}] \!-\! \hspace*{-2mm}\sum_{j \in \mathcal{N}^{-}(i)} \hspace*{-2mm}f[\edge{i}{j}] = g[i] 
%\vspace*{-2mm}
\end{equation}
at every node $i \!\in\! \nodes$.
\end{definition} 
For a more detailed discussion of the concept of network flows, we refer  to \cite{KleinbergTardos2006}. 
In this paper, we use the concept of network flows in order to characterize the connectivity properties 
or topology of a data graph $\graph\!=\!(\nodes,\edges,\mathbf{W})$ by 
interpreting the edge weights $W_{i,j}$ as capacity constraints that limit the amount of flow along the edge $\edge{i}{j} \in \edges$ which 
connects the data points $i,j \in \nodes $. 

In particular, the notion of network flows with demands allows to adapt the nullspace property, 
introduced within the theory of compressed sensing \cite{RauhutFoucartCS,EldarBookSamplingTheory} for 
sparse signals, to the problem of recovering clustered graph signals (cf. \eqref{equ_def_clustered_signal_model}).  

\begin{definition}
\label{def_sampling_set_resolves}
Consider a partition $\partition=  \{ \cluster_{1},\ldots,\cluster_{|\partition|} \}$ of 
pairwise disjoint subsets of nodes (clusters) $\cluster_{l} \subseteq \nodes$ and 
a set of sampled nodes $\samplingset \subseteq \nodes$.
The sampling set $\samplingset$ is said to satisfy the network nullspace property relative 
to the partition $\partition$, denoted NNSP-$\partition$, if for any signature $\sigma_{e} \in \{-1,1\}^{\partial \partition}$, which 
assigns the sign $\sigma_{e}$ to a boundary edge $e\in \partial \partition$, 
there is a flow $f[e]$ 
\begin{itemize} 
\item with demands $g[i]=0$, for $i \notin \samplingset$, 
\item its values satisfy    
\begin{align}
\label{equ_def_flows_NNSP}
f[e] &\!=\! \kappa \sigma_{e} W_{e} \mbox{ for } e\!\in\!\partial \partition \mbox{ with some $\kappa >1$, and }  \nonumber \\
f[e]  &\!\leq\!  W_{e} \mbox{ for } e\!\in\!\edges \setminus \partial \partition.
%g[i]  &\!=\! 0 \mbox{ for every node } i \!\notin\!   \samplingset. 
\end{align} 

\end{itemize}
\end{definition} 
%Note that Definition \ref{def_sampling_set_resolves} allows for arbitrary demands $g[i]$ at a sampled node $i \in \samplingset$. 
%We highlight that within Definition \ref{def_sampling_set_resolves}, we interpret the data graph $\graph$ as a flow network with the edge weights $W_{e}$ 
%being capacity constraints for some abstract flow over the edges of the data graph. 
It turns out that a sampling set $\samplingset$ satisfies NNSP-$\partition$ for a given partition $\partition$ of 
the data graph, then the nullspace $\mathcal{K}(\samplingset)$ (cf.\ \eqref{equ_def_kernel_sampling_set}) of the sampling set 
cannot contain a non-zero clustered graph signal of the form \eqref{equ_def_clustered_signal_model}. 

A naive verification of the NNSP involves a search over all signatures, whose number is around $2^{| \partial \partition|}$, which might be 
% a naive verification of the NNSP for 
%a given large data graph becomes 
intractable for large data graphs. % The situation is somewhat similar to compressed sensing of 
%sparse vectors, where the verification of nullspace property for a give matrix is NP-hard in general 
%\cite{Tillmann2014}. 
However, similar to many results in compressed sensing, we expect using probabilistic 
models for the data graph to render the verification of NNSP tractable \cite{RauhutFoucartCS}. 
In particular, we expect that probabilistic statements about how likely the NNSP is satisfied for random 
data graphs (e.g., conforming to a stochastic block model) can be obtained easily.

%We will make Definition \ref{def_sampling_set_resolves} more transparent 
%by stating an easy-to-check sufficient condition for the sampling set $\samplingset$ 
%and graph structure such that NNSP holds (cf.\ Lemma \ref{lem_suff_cond_NNSP} below). 
%%But first, we need to introduce a model for clustered graph signals. 
%\begin{lemma}
%\label{lem_suff_cond_NNSP}
%Consider a data graph $\graph$ containing the sampled nodes $\samplingset$. 
%The nodes of $\graph$ are partitioned into disjoint clusters $\partition=\{\cluster_{1},\ldots\}$. 
%If each boundary edge $\{i,j\} \in \partial \partition$ (cf.\ \eqref{equ_def_boundary}) with  $i\!\in\!\cluster_{l}$, $j \!\in\! \cluster_{l'}$ is 
%connected to sampled nodes, i.e., $\{m,i\}\!\in\!\edges$ and $\{n,j\}\!\in\!\edges$ with $m \!\in\! \samplingset\!\cap\!\cluster_{l}$,  
%$n\!\in\!\samplingset\!\cap\!\cluster_{l'}$, and weights $W_{m,i}, W_{n,j} \geq 2 W_{i,j}$, then 
%NNSP-($\samplingset,\partition$) holds. 
%\end{lemma} 

\subsection{Exact Recovery of Clustered Signals}
Now we are ready to state our main result, i.e., the NNSP ensures the solution 
\eqref{equ_semi_sup_learning_problem} to be unique and to coincide with the true underlying 
clustered graph signal of the form \eqref{equ_def_clustered_signal_model}. 
\begin{theorem} 
\label{main_thm_exact_sparse}
Consider a clustered graph signal $\clusteredsig\!\in\!\mathcal{X}$ (cf.\ \eqref{equ_def_clustered_signal_model})
which is observed only at the sampling set $\samplingset \subseteq \nodes$. 
If the sampling set $\samplingset$ satisfies NNSP-$\partition$, then the solution of \eqref{equ_semi_sup_learning_problem} 
is unique and coincides with $\clusteredsig$. 
\end{theorem}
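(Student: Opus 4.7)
The plan is to adapt the classical nullspace-property argument from sparse recovery to the graph total-variation setting. Let $\hat{x}[\cdot]$ be any feasible point of \eqref{equ_semi_sup_learning_problem}, write $\hat{x}[\cdot] = \clusteredsig + h[\cdot]$, and observe that $h[\cdot] \in \mathcal{K}(\samplingset)$ because $\hat{x}$ and $\clusteredsig$ agree on $\samplingset$. The goal is then to show $\|\clusteredsig + h\|_{\rm TV} > \|\clusteredsig\|_{\rm TV}$ for every nonzero admissible $h$, which forces $\clusteredsig$ to be the unique minimizer.

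First I split the TV along the partition. Because $\clusteredsig$ is constant on every cluster, each non-boundary edge $e \in \edges \setminus \partial \partition$ contributes $W_e |h[e^+]-h[e^-]|$ to $\|\clusteredsig + h\|_{\rm TV}$, while on boundary edges the reverse triangle inequality yields $W_e|x_c[e^+]-x_c[e^-]+h[e^+]-h[e^-]| \geq W_e|x_c[e^+]-x_c[e^-]| - W_e|h[e^+]-h[e^-]|$. Summing,
\begin{equation*}
\|\clusteredsig + h\|_{\rm TV} \geq \|\clusteredsig\|_{\rm TV} - \|h\|_{\partial \partition} + \|h\|_{\edges \setminus \partial \partition},
\end{equation*}
so it suffices to prove the ``nullspace'' estimate $\|h\|_{\partial \partition} < \|h\|_{\edges \setminus \partial \partition}$ strictly for every nonzero $h \in \mathcal{K}(\samplingset)$.

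The crux, and the step I expect to be the main obstacle, is to extract this estimate from NNSP-$\partition$ via a discrete integration-by-parts identity. I pick the signature $\sigma_e \defeq \mathrm{sign}(h[e^+]-h[e^-])$ on $\partial \partition$ so that $\|h\|_{\partial \partition} = \sum_{e \in \partial \partition} \sigma_e W_e (h[e^+]-h[e^-])$, and let $f[\cdot]$ be the flow guaranteed by NNSP-$\partition$ for this signature. Re-indexing the sum $\sum_{e \in \edges} f[e](h[e^+]-h[e^-])$ node-by-node and applying the conservation law \eqref{equ_define_demand_now} rewrites it as $\sum_i h[i]g[i]$, which vanishes because $h$ is zero on $\samplingset$ while $g$ is zero off $\samplingset$. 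Substituting $f[e]=\kappa\sigma_e W_e$ on $\partial \partition$ and bounding $|f[e]| \leq W_e$ elsewhere gives $\kappa \|h\|_{\partial \partition} \leq \|h\|_{\edges \setminus \partial \partition}$ with $\kappa > 1$. Combined with the TV split this produces $\|\clusteredsig + h\|_{\rm TV} \geq \|\clusteredsig\|_{\rm TV} + (1-1/\kappa)\|h\|_{\edges \setminus \partial \partition}$; at any optimum this forces $\|h\|_{\edges \setminus \partial \partition} = 0$, hence also $\|h\|_{\partial \partition} = 0$, so $h$ is constant on every connected component of $\graph$, and $h|_{\samplingset}=0$ then yields $h\equiv 0$. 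The delicate point is the sign-bookkeeping in the integration-by-parts identity, namely reconciling the orientation $(e^+, e^-)$, the directed neighbourhoods $\mathcal{N}^{\pm}(i)$, and the possibly negative values $\kappa \sigma_e W_e$ with the stated non-negativity of $f$; this is cleanest if one reads flows as signed or, equivalently, re-orients each boundary edge to point in the direction of positive $\sigma_e$ before invoking the NNSP.
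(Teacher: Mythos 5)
Your argument is correct and follows essentially the same route as the paper, which factors it into two lemmas: the TV split plus triangle inequality on boundary edges is the paper's Lemma \ref{lem_NSP1}, and your signature-matched flow with the discrete integration-by-parts identity is exactly the paper's Lemma \ref{lem_NNSP_samplingset_suff_recovery}, where the same cancellation $\sum_i h[i]g[i]=0$ is obtained via an augmented graph and an appeal to Tellegen's theorem. Your explicit summation-by-parts (and your flagging of the sign/orientation bookkeeping, plus the final step ruling out a nonzero constant $h$ on a component) is if anything slightly more careful than the paper's own write-up.
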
 
Thus, if we sample a clustered graph signal $\xsig$ (cf.  \eqref{equ_def_clustered_signal_model}) 
on a sampling set which satisfies NNSP-$\partition$, we can expect convex 
recovery algorithms (which are based on solving \eqref{equ_semi_sup_learning_problem}) to accurately recover $\xsig$. 

{\bf A partial converse.} The recovery condition provided by Theorem \ref{main_thm_exact_sparse} is essentially tight, 
i.e., if the sampling set does not satisfy NNSP-$\partition$, then the are solutions to \eqref{equ_semi_sup_learning_problem} which are 
different from the true underlying clustered graph signal.

Consider a clustered graph signal 
\begin{equation} 
\label{equ_partial_converse_graph_sig}
x_{\rm c}[i]= 1\cdot \mathcal{I}_{\mathcal{C}_{1}}[i] + 2 \cdot \mathcal{I}_{\mathcal{C}_{2}}[i]
\end{equation} 
defined over a chain graph $\graph_{\rm chain}$ containing an even number $N$ of nodes (Figure \ref{fig_twoclusterchain}). 
We partition the graph into two equal-sized clusters $\partition_{\rm c} = \{\cluster_{1},\cluster_{2}\}$ with $\cluster_{1} = \{1,\ldots,N/2\}$ 
and $\cluster_{2} = \{N/2+1,\ldots,N \}$. 
The edges within clusters are connected by edges with unit weight, while the single edge $\{ N/2, N/2+1\}$ connecting the two clusters has 
weight $1/\delta$. Let us assume that we sample the graph signal $x_{\rm c}[i]$ on the sampling set $\samplingset_{\rm c}= \{1,N \}$. 

For any $\delta >1$, the sampling set $\samplingset_{\rm c}$ satisfies the NNSP-$\partition_{\rm c}$ with $\kappa = \delta >1$ (cf. \eqref{equ_def_flows_NNSP}). 
Thus, as long as the boundary edge has weight $1/\delta$ with $\delta >1$, Theorem \ref{main_thm_exact_sparse} guarantees that the true clustered graph signal $x_{\rm c}[i]$ can 
be perfectly recovered via solving \eqref{equ_semi_sup_learning_problem}. 

If, on the other hand, the weight of the boundary edge is $1/ \delta$ with some $\delta \leq 1$, 
then the sampling set $\samplingset_{\rm c}$ does not satisfy the NNSP-$\partition_{\rm c}$. In this case, 
as can be verified easily, the true graph signal $x_{\rm c}[i]$ is not the unique solution to \eqref{equ_semi_sup_learning_problem} 
anymore. Indeed, for $\delta \leq 1$ it can be shown that the graph signal \eqref{equ_partial_converse_graph_sig} 
has a TV norm at least as large as the graph signal $x'[i] = \big( 1\!-\!\frac{i-1}{N-1} \big) x_{\rm c}[1] + \frac{i-1}{N-1}x_{\rm c}[N]$, which linearly interpolates 
between the sampled signal values $x_{\rm c}[1]$ and $x_{\rm c}[N]$. 

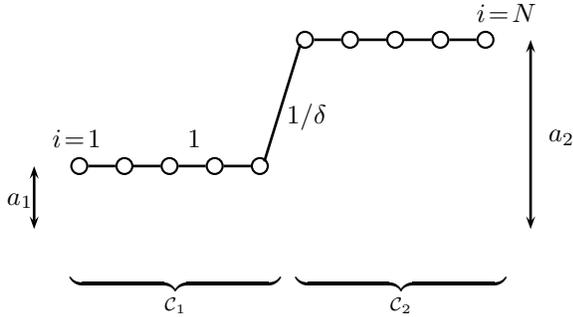
\begin{figure}
\begin{pspicture}(-1.5,-1)(5.5,2.5)
\psset{unit=1.2cm}
\psline[linewidth=1pt]{<->}(-0.5,0)(-0.5,0.7)
\pscircle(0,0.7){0.1}
\rput[tl](1.2,1.1){$1$}
\psline[linewidth=1pt]{}(0.1,0.7)(0.4,0.7)
%\psline[linewidth=1pt]{}(0,0)(0,0.6)
\pscircle(0.5,0.7){0.1}
\psline[linewidth=1pt]{}(0.6,0.7)(0.9,0.7)
\pscircle(1,0.7){0.1}
\psline[linewidth=1pt]{}(1.1,0.7)(1.4,0.7)
\pscircle(1.5,0.7){0.1}
\psline[linewidth=1pt]{}(1.6,0.7)(1.9,0.7)
\pscircle(2,0.7){0.1}
\psline[linewidth=1pt]{}(2.05,0.75)(2.45,2.05)
\rput[tl](2.3,1.4){$1/\delta$}
\pscircle(2.5,2.1){0.1}
\psline[linewidth=1pt]{}(2.6,2.1)(2.9,2.1)
\pscircle(3,2.1){0.1}
\psline[linewidth=1pt]{}(3.1,2.1)(3.4,2.1)
\pscircle(3.5,2.1){0.1}
\psline[linewidth=1pt]{}(3.6,2.1)(3.9,2.1)
\pscircle(4.0,2.1){0.1}
\psline[linewidth=1pt]{}(4.1,2.1)(4.4,2.1)
\pscircle(4.5,2.1){0.1}
%\psline[linewidth=1pt]{}(4.5,0)(4.5,2.0)
\psline[linewidth=1pt]{<->}(5,0)(5,2.1)

\rput[tl](-0.8,0.4){$a_1$}
\rput[tl](5.2,1.1){$a_2$}
\rput[tl](-0.3,1.1){$i\!=\!1$}
\rput[tl](4.4,2.5){$i\!=\!N$}
\rput[tl](-0.1,-0.5){$\underbrace{\hspace{2.8cm}}_{\mathcal{C}_1}$}
\rput[tl](2.4,-0.5){$\underbrace{\hspace{2.8cm}}_{\mathcal{C}_2}$}
\end{pspicture}
\caption{\label{fig_twoclusterchain}A clustered graph signal $x[i] = a_{1} \mathcal{I}_{\mathcal{C}_{1}}[i] +a_{2} \mathcal{I}_{\mathcal{C}_{2}}[i]$  
(cf.\ \eqref{equ_def_clustered_signal_model}) defined over a chain graph $\graph_{\rm chain}$ which is partitioned into two equal-size clusters $\cluster_{1}$ and $\cluster_{2}$ which 
consist of consecutive nodes. 
The edges connecting nodes within the same cluster have weight $1$, while the single edge connecting nodes from different clusters has weight $1/\kappa$.}
\end{figure} 

\subsection{Recovery of Approximately Clustered Signals}

The scope of Theorem \ref{main_thm_exact_sparse} is somewhat limited as 
it applies only to graph signals which are precisely of the form \eqref{equ_def_clustered_signal_model}. 
We now state a more general result applying to any graph signal $\xsig\!\in\!\graphsigs$. 
\begin{theorem} 
\label{main_thm_approx_sparse}
Consider a graph signal $\xsig\!\in\!\graphsigs$ which is 
observed only at the sampling set $\samplingset$. 
If NNSP-$\partition$ holds with $\kappa =2$ in \eqref{equ_def_flows_NNSP}, any solution $\hat{\vx}$ 
of \eqref{equ_semi_sup_learning_problem} satisfies (cf.\ \eqref{equ_def_clustered_signal_model})
\begin{equation}
\label{equ_result_stable_NNSP}
\|\hat{x}[\cdot]- x[\cdot] \|_{\rm TV} \leq 6  \min_{\mathbf{a} \in \mathbb{R}^{|\partition|} } \| \xsig - \sum_{\cluster \in \partition} a_{\cluster} \mathcal{I}_{\cluster}[\cdot] \|_{\rm TV}. 
\end{equation} 
\end{theorem}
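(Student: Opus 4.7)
\medskip\noindent
\textbf{Proof proposal.} Let $x_{c}[\cdot] = \sum_{\cluster \in \partition} a^{*}_{\cluster} \mathcal{I}_{\cluster}[\cdot]$ attain the minimum on the right-hand side of \eqref{equ_result_stable_NNSP}, and set $r[\cdot] \defeq x[\cdot] - x_{c}[\cdot]$ so that $\|r[\cdot]\|_{\rm TV}$ equals one sixth of the bound we aim for. Define the error $h[\cdot] \defeq \hat{x}[\cdot] - x[\cdot]$. Since $\hat{x}[\cdot]$ is a solution of \eqref{equ_semi_sup_learning_problem} and $x[\cdot]$ itself is feasible, we have $h[i] = 0$ for every $i \in \samplingset$ (so $h[\cdot] \in \mathcal{K}(\samplingset)$) and the optimality condition $\|\hat{x}[\cdot]\|_{\rm TV} \leq \|x[\cdot]\|_{\rm TV}$.

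The first step is the standard \emph{cone-type} inequality, obtained by splitting the TV into the boundary $\partial \partition$ and its complement. Because $x_{c}[\cdot]$ is constant on each cluster, $x_{c}[e^{+}] = x_{c}[e^{-}]$ for every $e \in \edges \setminus \partial \partition$. Writing $\hat{x}[\cdot] = x_{c}[\cdot] + r[\cdot] + h[\cdot]$ and applying the triangle inequality edge-by-edge (lower-bounding on $\partial\partition$, where the large jumps of $x_{c}[\cdot]$ live, and using equality of $x_{c}$'s contribution to zero on the complement) to $\|\hat{x}[\cdot]\|_{\rm TV} \leq \|x[\cdot]\|_{\rm TV} = \|x_{c}[\cdot] + r[\cdot]\|_{\rm TV}$ yields, after cancellation of $\|x_{c}[\cdot]\|_{\partial \partition}$ on both sides,
\begin{equation}
\label{equ_cone_inequality}
\|h[\cdot]\|_{\edges \setminus \partial \partition} - \|h[\cdot]\|_{\partial \partition} \leq 2 \|r[\cdot]\|_{\rm TV}.
\end{equation}

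The second step uses NNSP-$\partition$ with $\kappa = 2$ to reverse the relation between boundary and interior TV of $h[\cdot]$. Choose the signature $\sigma_{e} \defeq \mathrm{sign}(h[e^{+}] - h[e^{-}])$ for $e \in \partial \partition$, and let $f[\cdot]$ be the corresponding flow furnished by Definition~\ref{def_sampling_set_resolves}. Discrete integration by parts on $\graph$ gives
\begin{equation}
\sum_{i \in \nodes} h[i]\, g[i] = \sum_{e \in \edges} f[e]\,(h[e^{+}] - h[e^{-}]).
\end{equation}
The left-hand side vanishes, since $g[i] = 0$ for $i \notin \samplingset$ and $h[i] = 0$ for $i \in \samplingset$. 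Splitting the right-hand side into boundary and complement, using $f[e] = 2\sigma_{e} W_{e}$ on $\partial \partition$ (which yields $2\|h[\cdot]\|_{\partial \partition}$ with our choice of signature) and $0 \leq f[e] \leq W_{e}$ elsewhere, produces
\begin{equation}
\label{equ_reverse_inequality}
2\|h[\cdot]\|_{\partial \partition} \leq \|h[\cdot]\|_{\edges \setminus \partial \partition}.
\end{equation}

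Finally, combining \eqref{equ_cone_inequality} and \eqref{equ_reverse_inequality} gives $\tfrac{1}{2}\|h[\cdot]\|_{\edges \setminus \partial \partition} \leq 2 \|r[\cdot]\|_{\rm TV}$, so $\|h[\cdot]\|_{\edges \setminus \partial \partition} \leq 4 \|r[\cdot]\|_{\rm TV}$ and $\|h[\cdot]\|_{\partial \partition} \leq 2 \|r[\cdot]\|_{\rm TV}$, whence $\|h[\cdot]\|_{\rm TV} \leq 6 \|r[\cdot]\|_{\rm TV}$, which is the claim. The only delicate point is the bookkeeping in step one: one has to be careful to apply the reverse triangle inequality with the correct sign on $\partial \partition$ (where $x_{c}$ supplies the dominant jump) while using the ordinary triangle inequality on $\edges \setminus \partial \partition$; everything else is a routine consequence of NNSP once the integration-by-parts identity is set up.
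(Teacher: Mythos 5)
Your proof is correct and follows essentially the same route as the paper: an optimality-based cone inequality combined with the reverse bound $\|h[\cdot]\|_{\edges \setminus \partial \partition} \geq 2\|h[\cdot]\|_{\partial \partition}$, which you derive from NNSP-$\partition$ via the same flow/Tellegen (discrete integration-by-parts) identity the paper uses in Lemma~\ref{lem_NNSP_samplingset_suff_recovery}. The only cosmetic difference is that you run the cone step against the best clustered approximant (obtaining $2\|r[\cdot]\|_{\rm TV}$ on the right), whereas the paper's Lemma~\ref{thm_approx_sparse_stable_result} runs it against $x[\cdot]$ itself and only at the end bounds $\|x[\cdot]\|_{\edges \setminus \partial \partition} \leq \min_{\mathbf{a}} \| x[\cdot] - \sum_{\cluster \in \partition} a_{\cluster} \mathcal{I}_{\cluster}[\cdot]\|_{\rm TV}$; both variants yield the constant $6$.
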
 
Thus, as long as the underlying graph signal $x[\cdot]$ can be well approximated 
by a clustered signal of the form \eqref{equ_def_clustered_signal_model}, 
any solution $\hat{x}[\cdot]$ of \eqref{equ_semi_sup_learning_problem} is a graph signal 
which varies significantly only over the boundary edges $\partial \partition$. 
We highlight that the error bound \eqref{equ_result_stable_NNSP} only controls the TV (semi-)norm of 
the error signal $\hat{x}[\cdot] - x[\cdot]$. In particular, this bound does not directly allow to 
quantify the size of the global mean squared error $(1/\siglen) \sum_{i \in \nodes} (\hat{x}[i] - x[i])^2$. 
However, the bound \eqref{equ_result_stable_NNSP} allows to characterize idenfiability of the underlying partition $\partition$. 
Indeed, if the signal values $a_{\cluster}$ in \eqref{equ_def_clustered_signal_model} satisfy 
$\min_{\cluster \in \partition} |a_{\cluster} - a_{\cluster'}| \min_{e \in \partial \partition} W_{e} \geq \|\hat{x}[\cdot]- \hat{x}[\cdot] \|_{\rm TV}$, 
we can read off the cluster boundaries from the signal differences $\hat{x}[i] - \hat{x}[j]$ (over edges $\{i,j\} \in \edges$). 

One particular use of Theorems \ref{main_thm_exact_sparse}, \ref{main_thm_approx_sparse} is to guide the 
choice for the sampling set $\samplingset$. In particular, % for a suitably chosen partition $\partition$ and associated 
%signal model \eqref{equ_def_clustered_signal_model}, 
one should aim at sampling nodes such that the NNSP is likely to be satisfied. According to the definition of the NNSP, 
we should sample nodes which are well connected (in the sense of allowing for a large flow) to the boundary edges which 
connect different clusters. This approach has been studied empirically in \cite{MaraJungAsilomar2017,SaeedSampta17}, 
verifying accurate recovery by efficient convex optimization methods using sampling sets which satisfy the NNSP (cf. Definition \ref{def_sampling_set_resolves}) with 
high probability.   

%%%%%%%%%%%%%%%%%%%%%%%%%%
\section{Numerical Experiments}
\label{sec5_experiment}

We now verify the practical relevance of our theoretical findings by means of two numerical experiments. 
The first experiment is based on a synthetic data set whose underlying data graph is a chain graph 
$\graph_{\text{chain}}$. A second experiment revolves around a real-world data set describing the 
roadmap of Minnesota \cite{ChenVarma2016,Gleich2018}. 

\subsection{Chain Graph}
We generated a synthetic data set whose data graph is a chain graph $\graph_{\rm chain}$. 
This chain graph contains $|\nodes|=100$ nodes which are connected by $|\edges|=99$ undirected edges $\{i,i+1\}$, for $i \in \{1,\ldots,99\}$ and 
partitioned into $|\partition|=10$ equal-size clusters $\partition =\{ \cluster_l\}_{l=1,2,...,10}$, 
each cluster containing $10$ consecutive nodes. The edges connecting nodes in the same cluster 
have weight $W_{i,j}=4$, while those connecting different clusters have weight $W_{i,j}=2$. 
For this data graph we generated a clustered graph signal $x[i]$ of the form with alternating coefficients $a_{l} \in \{1,5\}$. 
%weight $W_{\{i,j\}} = 4$. The boundary edges $\{i,j\} \in \partial{\mathcal{F}}$, i.e. the edges connecting nodes $i$ 
%and $j$ which belong to different clusters, are weighted according to $W_{\{i,j\}} = 2$. This graph representation 
%of the signal thus incorporates the NNSP condition as denoted by Lemma \ref{lem_suff_cond_NNSP}. 
%The generated signal is of the form (\ref{equ_def_clustered_signal_model}) where we adopted coefficients 
%$a_l \in \{1,5\}$ representing the true signal values. 

The graph signal $x[i]$ is observed only at the nodes belonging to a sampling set, which is either $\samplingset_{1}$ or $\samplingset_{2}$. 
The sampling set $\samplingset_{1}$ contains exactly one node from each cluster $\cluster_{l}$ and thus, as can be verified easily, 
satisfies the NNSP (cf. Definition \ref{def_sampling_set_resolves}). While having the same size as $\samplingset_{1}$, the sampling set $\samplingset_{2}$ does not 
contain any node of clusters $\cluster_{2}$ and $\cluster_{4}$. 

In Figure \ref{fig:chaingraph_slpnnsp}, we illustrate the recovered signals obtained by solving \eqref{equ_semi_sup_learning_problem} 
using the sparse label propagation (SLP) algorithm \cite{JungHero2016}, which is fed with signal values on the sampling set (being either $\samplingset_{1}$ or $\samplingset_{2}$). 
The signal recovered from the sampling set $\samplingset_{1}$, which satisfies the NNSP, closely resembles the true underlying 
clustered graph signal. In contrast, the sampling set $\samplingset_{2}$, which does not satisfy the NNSP, results in a recovered signal 
which significantly deviates from the true signal.

\begin{figure}[h]
	\centering
	\vspace{-0.35cm}
	\includegraphics[width=\linewidth]{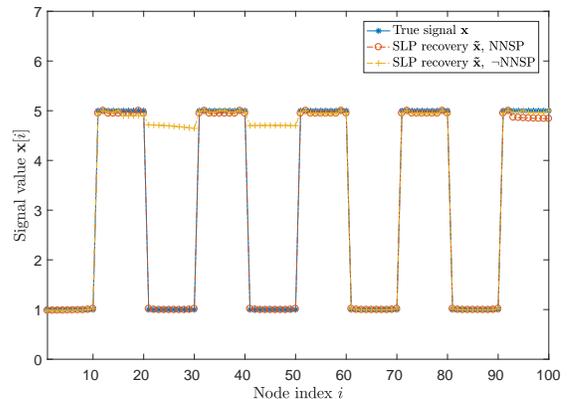}
	%\vspace{-0.5cm}
	\caption{Clustered graph signal $x[i]$ along with the recovered signals obtained from sampling sets $\samplingset_{1}$ and 
	$\samplingset_{2}$.}
	\label{fig:chaingraph_slpnnsp}
\end{figure}

\subsection{Minnesota Roadmap} 
The second data set, with associated data graph $\graph_{\rm min}$, represents the roadmap of Minnesota \cite{ChenVarma2016}. 
The data graph $\graph_{\rm min}$ consists of $|\nodes|=2642$ nodes, and $|\edges|=3303$ edges. We generate a 
clustered graph signal defined over $\graph_{\rm min}$ by randomly selecting three different nodes $\{i_{1},i_{2},i_{3}\}$ 
which are declared as ``cluster centres'' of the clusters $\cluster_{1},\cluster_{2},\cluster_{3}$. The remaining nodes 
$\nodes \setminus \{i_{1},i_{2},i_{3}\}$ are then associated to the cluster whose centre is nearest in the sense of smallest 
geodesic distance. The edges connecting nodes within the same cluster have weight $W_{i,j}=4$, 
and those connecting different clusters have weight $W_{i,j}=2$.

We use SLP to recover the entire graph signal from its values obtained for the nodes in a sampling set. Two different choices $\samplingset_{1}$ and $\samplingset_{2}$ 
for the sampling set are considered: The sampling set $\samplingset_{1}$ is based on the NNSP and consists of all nodes which are 
adjacent to the boundary edges between two different clusters. In contrast, the sampling set $\samplingset_{2}$ is obtained by 
selecting uniformly at random a total of $|\samplingset_{1}|$ nodes from $\graph_{\rm min}$, i.e., we ensure $|\samplingset_{1}| = |\samplingset_{2}|$. 

The resulting MSE is $0.0023$ for the sampling set $\samplingset_{1}$ (conforming with NNSP), while the recovery using the random sampling set $\samplingset_{2}$ 
incurred an average (over $100$ i.i.d. simulation runs) MSE of $0.0502$. %with standard deviation 0.0527 for sampling set $\samplingset_{2}$ over 100 runs, i.e. 100 different random sampling sets $\samplingset_{2}$.
In Figure \ref{fig:results_minnesota}, we depict the recovered graph signals using signal samples from either $\samplingset_{1}$ or $\samplingset_{2}$ (one typical realization). 
Evidently, the recovery using the sampling set $\samplingset_{1}$ (which is guided by the NNSP) results in a more accurate recovery compared to using the random sampling 
set $\samplingset_{2}$.

\begin{figure}
	\centering
	\vspace{-0.25cm}
	\subfloat[\label{fig:actual_minnesota}]{\includegraphics[scale=.45]{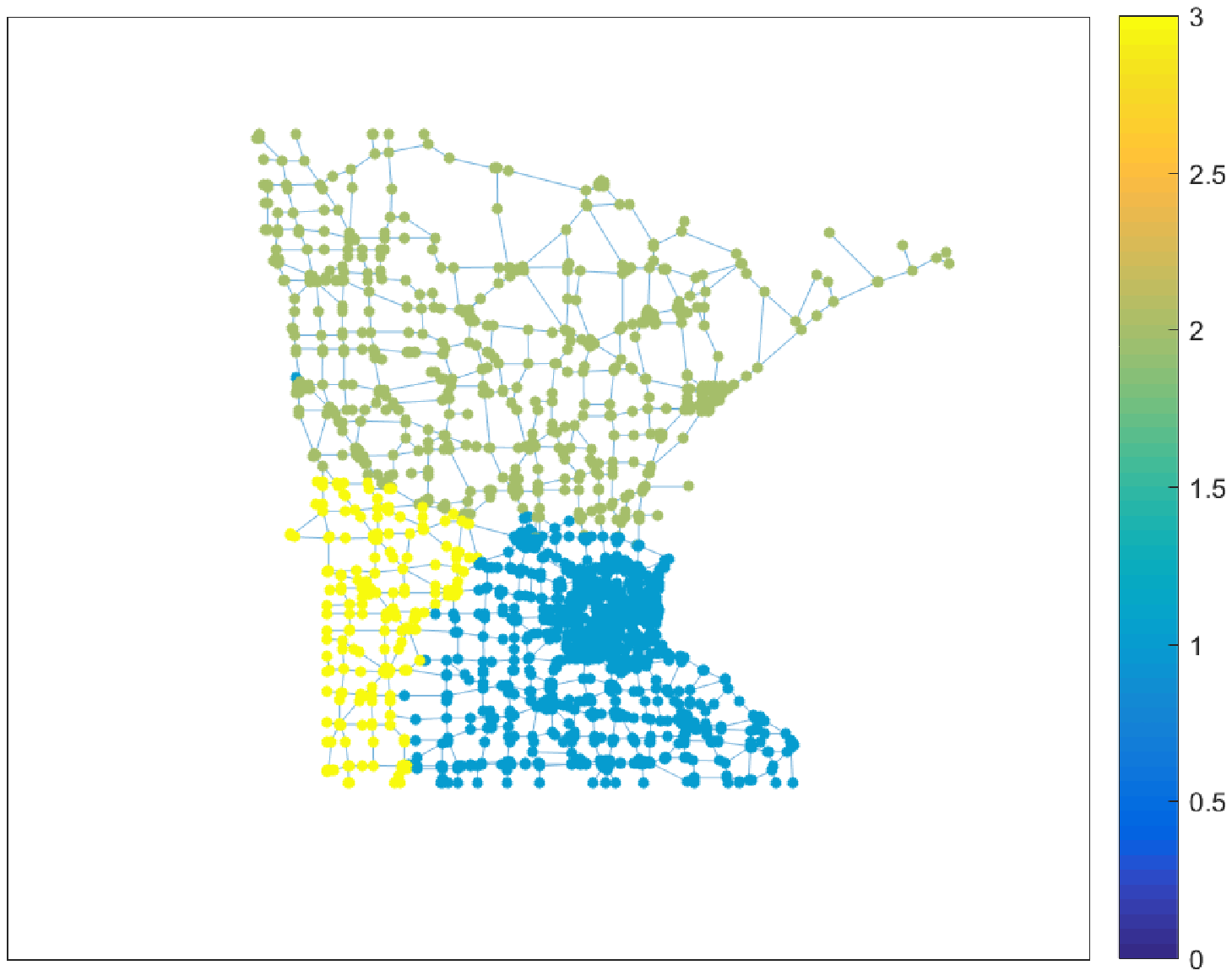}}
	\par\medskip
	\begin{minipage}{.5\linewidth}
		\centering
		\subfloat[\label{fig:minnesota_slpnnsp_1}]{\includegraphics[scale=.25]{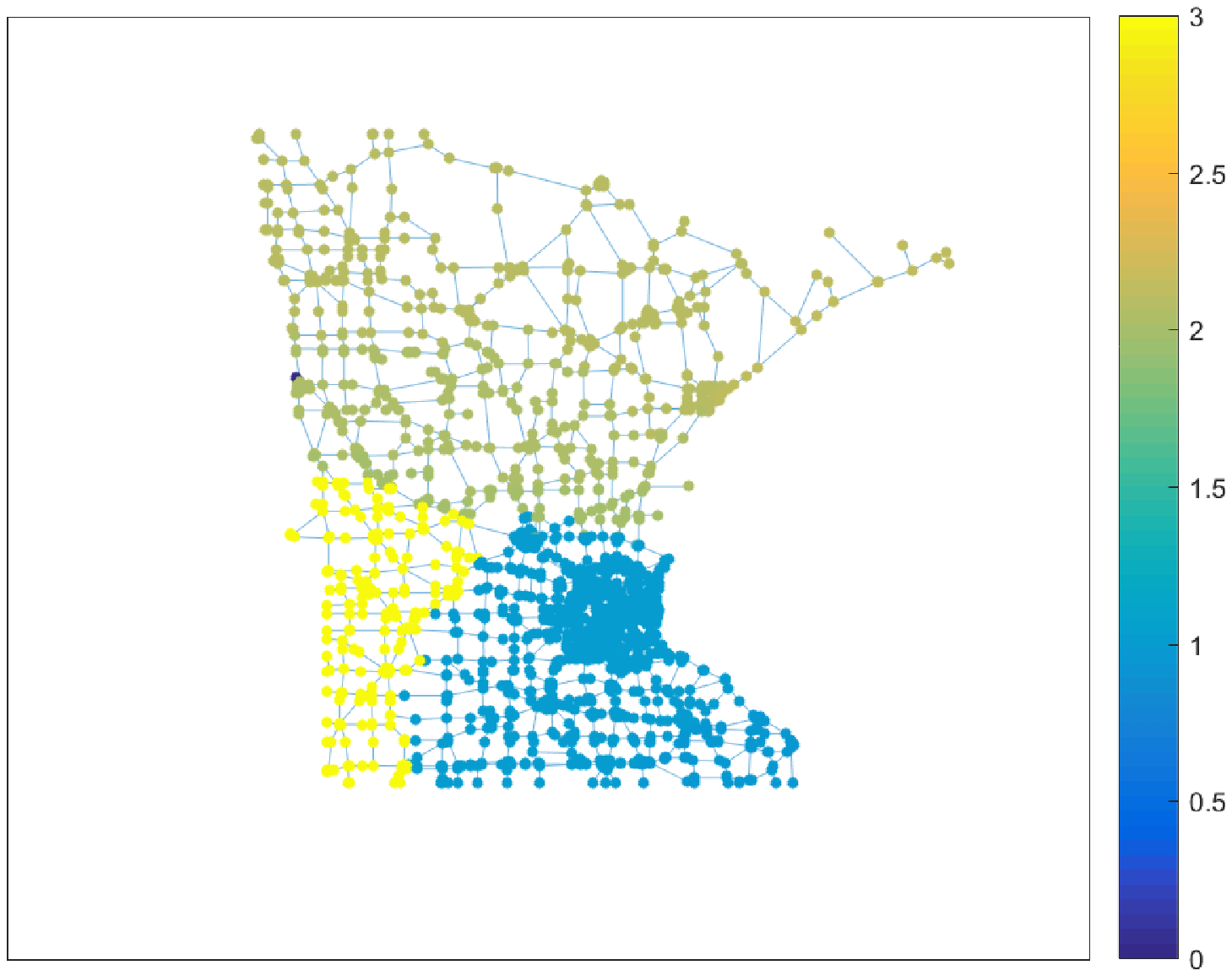}}
	\end{minipage}%
	\begin{minipage}{.5\linewidth}
		\centering
		\subfloat[\label{fig:minnesota_slpnnsp_2}]{\includegraphics[scale=.25]{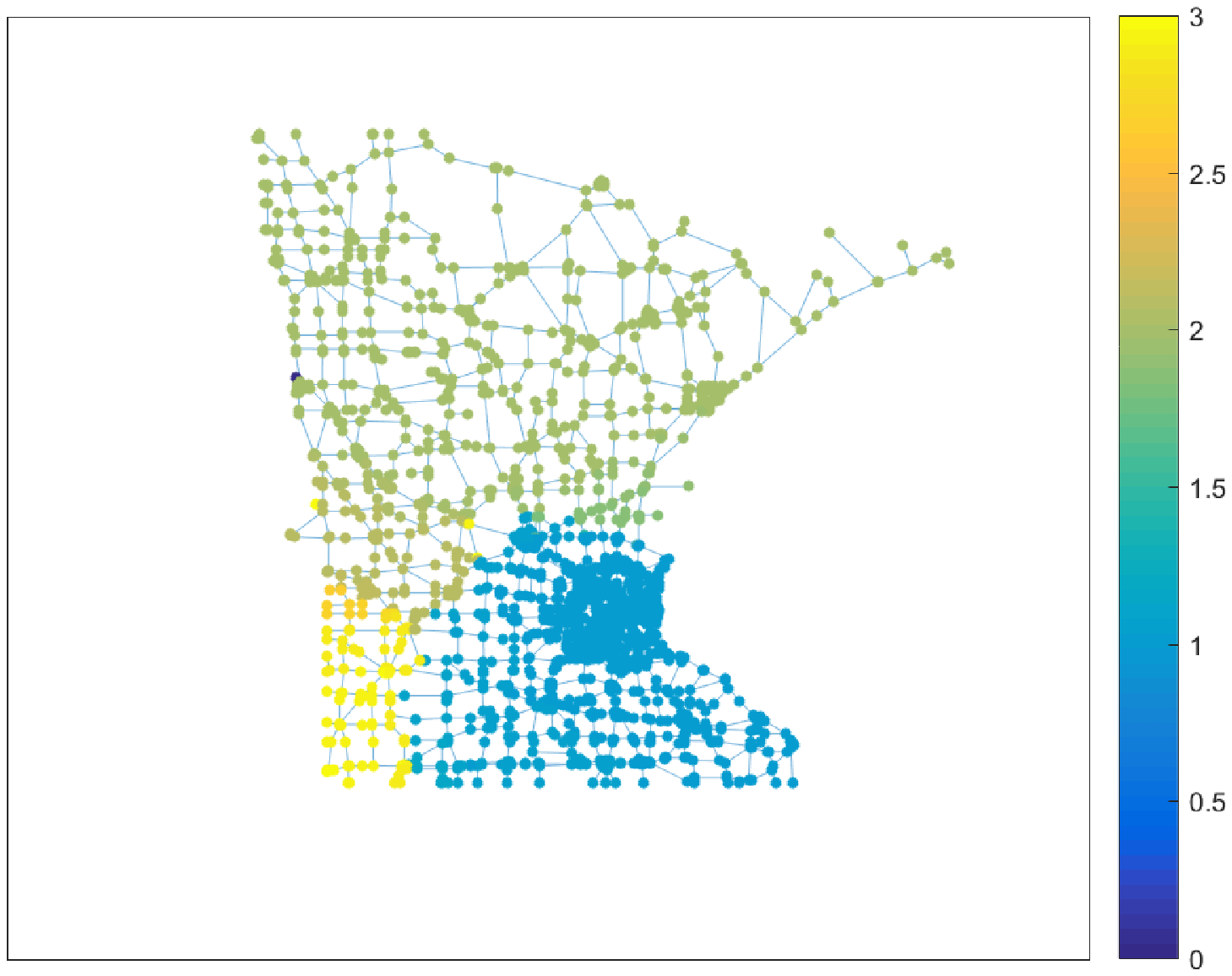}}
	\end{minipage}
	
	\caption{\label{fig:results_minnesota} True graph signal (a) and recovered signals for the Minnesota roadmap 
	data set obtained (b) when using sampling set $\samplingset_{1}$, (c) or sampling set $\samplingset_{2}$ (Figure \ref{fig:minnesota_slpnnsp_2}).}
	
	\vspace{-0.25cm}
\end{figure}

%%%%%%%%%%%%%%%%%%%%%%%%%%

\section{Conclusions}
\label{sec6_conclusion}

We considered the problem of recovering clustered graph signals, defined 
over complex networks, from observing its signal values on a small set of sampled nodes. 

By extending tools from compressed sensing, we derived a sufficient condition, 
the network nullspace property (NNSP), on the graph topology and sampling set such that a 
convex recovery method is accurate. This condition is based on the connectivity properties of the 
underlying network. In particular, it requires the existence of certain network 
flows with the edge weights of the data graph being interpreted as capacities.  

The NNSP involves both, the sampling set and the cluster structure of the data graph. 
Roughly speaking it requires to sample more densely near the boundaries between different clusters. 
This intuition has be verified by means of numerical experiments on synthetic and real-world datasets.

Our work opens up several avenues for future research. In particular, it would be interesting to analyze how likely the NNSP holds 
for certain random network models and sampling strategies. The tightness of the resulting recovery guarantees 
could then be contrasted with fundamental lower bounds obtained from an information-theoretic approach to 
minimax-estimation. Moreover, we would like to study variations of the SLP problem which are more suitable for classification problems.

\section*{Acknowledgement}
Parts of the material underlying this work has been presented in \cite{NNSPSampta2017}. 
However, this paper significantly extends \cite{NNSPSampta2017} by providing detailed proofs as well 
as extended discussions of the main results.
% and at the 2018 IEEE 
%International Conference on Acoustics, Speech and Signal Processing. 
This manuscript is available as a pre-print 
\cite{NNSPPreprint} at the following address: \url{https://arxiv.org/abs/1705.04379}. Copyright of this pre-print version rests with the authors.

%\section{Acknowledgment} 
%The author is grateful for valuable feedback on the manuscript by Ayelet Heimowitz and Yonina C.\ Eldar.
%\vspace*{-3mm}
\section{Proofs}
\label{sec_proofs} 
%\vspace*{-2mm}

The proofs for Theorem \ref{main_thm_exact_sparse} and Theorem \ref{main_thm_approx_sparse} rely on 
recognizing the recovery problem \eqref{equ_semi_sup_learning_problem} as an analysis 
$\ell_{1}$-minimization problem \cite{CoSparseModel}. %as it arises naturally 
%from the co-sparse analysis model of compressed sensing . 
A sufficient condition for analysis $\ell_{1}$-minimization to deliver 
the correct solution $x[\cdot]$ is given by the analysis nullspace 
property \cite{CoSparseModel,KabRau2015Chap}. 
%which can be considered as the analysis-model pendant of the nullspace property \cite[Def.\ 4.1.]{RauhutFoucartCS}. 
%The main technical part of the proof amounts to verifying 
%that the network nullspace property (cf.\ Definition \ref{def_sampling_set_resolves}) implies 
%this analysis nullspace property. 
%\subsection{The Analysis Nullspace Property}
%As an intermediate step towards proving our main 
%results Theorem \ref{main_thm_exact_sparse} and \ref{main_thm_approx_sparse}, 
%we now rephrase the stable analysis nullspace property \cite{CoSparseModel,KabRau2015Chap}
%in graph signal terminology. 
In particular, the sampling set $\samplingset$ is said to satisfy the stable analysis nullspace property w.r.t. an edge set 
$\edgesupport \subseteq \edges$ if 
\begin{equation} 
\label{equ_NSP1}
\|  u[\cdot] \|_{\edges \setminus \edgesupport} \geq \kappa \|  u[\cdot] \|_{\edgesupport} \mbox{ for any } u[\cdot] \in \mathcal{K}(\samplingset),
\end{equation} 
for some constant $\kappa >1$. 
\begin{lemma} 
\label{lem_NSP1}
Consider a data graph $\graph$ and fixed partitioning $\partition=\{\cluster_{1},\ldots,\cluster_{|\partition|}\}$ 
of its nodes into $|\partition|$ clusters $\cluster_{l}$. We observe a clustered graph 
signal $\xsig$ with $\xsigval{i} = \sum_{l=1}^{|\partition|} a_{l} \mathcal{I}_{\cluster_{l}} [i]$ 
at the sampled nodes $\samplingset \!\subseteq\!\nodes$.  
If \eqref{equ_NSP1} holds for $\edgesupport\!=\!\partial \partition$, 
then \eqref{equ_semi_sup_learning_problem} has a unique solution given by $\xsig$. 
\end{lemma}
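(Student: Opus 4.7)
The plan is to recognize this as the standard analysis nullspace property argument adapted to graph total variation. I would let $\hat{x}[\cdot]$ be any minimizer of \eqref{equ_semi_sup_learning_problem} and study the deviation $u[\cdot] \defeq \hat{x}[\cdot] - x[\cdot]$. Since both $\hat{x}$ and $x$ agree on every sampled node (by feasibility and observation), $u[\cdot] \in \mathcal{K}(\samplingset)$, so the NSP hypothesis \eqref{equ_NSP1} applies to $u[\cdot]$.

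The next step is to exploit the clustered structure of $x[\cdot]$. Because $x[i] = \sum_l a_l \mathcal{I}_{\cluster_l}[i]$, the only edges on which $x[\cdot]$ varies are the boundary edges $\partial \partition$; consequently $\| x[\cdot] \|_{\edges \setminus \partial \partition} = 0$ and $\|x[\cdot]\|_{\rm TV} = \|x[\cdot]\|_{\partial \partition}$. Splitting $\|\hat{x}[\cdot]\|_{\rm TV} = \|x[\cdot]+u[\cdot]\|_{\rm TV}$ over $\partial \partition$ and its complement, then applying the reverse triangle inequality on the boundary part, yields
\begin{equation}
\|\hat{x}[\cdot]\|_{\rm TV} \;\geq\; \|x[\cdot]\|_{\rm TV} - \|u[\cdot]\|_{\partial \partition} + \|u[\cdot]\|_{\edges \setminus \partial \partition}. \nonumber
\end{equation}

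Now I would invoke \eqref{equ_NSP1} with $\edgesupport = \partial \partition$ to replace $\|u[\cdot]\|_{\edges \setminus \partial \partition}$ by at least $\kappa \|u[\cdot]\|_{\partial \partition}$, obtaining $\|\hat{x}[\cdot]\|_{\rm TV} \geq \|x[\cdot]\|_{\rm TV} + (\kappa-1)\|u[\cdot]\|_{\partial \partition}$. Since $x[\cdot]$ itself is feasible for \eqref{equ_semi_sup_learning_problem}, optimality of $\hat{x}[\cdot]$ forces $\|\hat{x}[\cdot]\|_{\rm TV} \leq \|x[\cdot]\|_{\rm TV}$, and because $\kappa > 1$ we conclude $\|u[\cdot]\|_{\partial \partition}=0$. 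Feeding this back into the earlier inequality (and using optimality once more) then also gives $\|u[\cdot]\|_{\edges \setminus \partial \partition} = 0$, i.e., $\|u[\cdot]\|_{\rm TV}=0$, so $u[\cdot]$ is constant on each connected component of $\graph$.

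The main obstacle, and the only subtle step, is the final passage from $\|u[\cdot]\|_{\rm TV}=0$ to $u[\cdot] \equiv 0$. This requires that every connected component of $\graph$ contains at least one sampled node; otherwise $u[\cdot]$ could be a nonzero constant on an unsampled component, violating uniqueness. Under the natural (and implicit) assumption that each component is sampled, the constraint $u[i]=0$ for $i \in \samplingset$ together with constancy on each component gives $u[\cdot]\equiv 0$, so $\hat{x}[\cdot] = x[\cdot]$ is the unique solution. I would make this connectivity assumption explicit at the start of the proof to avoid any ambiguity.
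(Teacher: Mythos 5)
Your proof is correct and follows essentially the same route as the paper: form $u[\cdot]=\hat{x}[\cdot]-x[\cdot]\in\mathcal{K}(\samplingset)$, use the fact that $x[\cdot]$ only varies on $\partial\partition$ to split the TV norm, apply the reverse triangle inequality on the boundary edges, and invoke \eqref{equ_NSP1} together with optimality of $\hat{x}[\cdot]$. The one place you go beyond the paper is also the one place the paper is actually sloppy: the paper concludes a \emph{strict} inequality $\|\hat{x}[\cdot]\|_{\rm TV}>\|x[\cdot]\|_{\rm TV}$ directly from \eqref{equ_NSP1}, but that step is only valid when $\|u[\cdot]\|_{\partial\partition}>0$; if $u[\cdot]$ is a nonzero constant on an unsampled connected component, then $\|u[\cdot]\|_{\rm TV}=0$ and no contradiction arises. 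Your explicit assumption that every connected component contains a sampled node (or equivalently, your careful final step passing from $\|u[\cdot]\|_{\rm TV}=0$ to $u[\cdot]\equiv 0$) is exactly what is needed to close this gap, so your version is the more rigorous of the two.
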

\begin{proof}
Consider a graph signal $\hat{x}[\cdot]$, which is different from the true underlying graph signal $x[\cdot]$, being feasible for 
\eqref{equ_semi_sup_learning_problem}, i.e, $\hat{x}[i] = x[i]$ for all sampled nodes $i \in \samplingset$. Then, 
the difference $u[i] \defeq \hat{x}[i] - x[i]$ belongs to the kernel $\mathcal{K}(\samplingset)$ (cf.\ \eqref{equ_def_kernel_sampling_set}). 
Note that, since $x[i]$ is constant for all nodes $i \in \cluster_{l}$ in the same cluster, 
\begin{equation}
\label{equ_supp_vx_edgesupport}
\hat{x}[i] - \hat{x}[j] = u[i] - u[j] \mbox{, for any edge } \{i,j\} \in \edges \setminus\partial \partition. 
\end{equation} 
By the triangle inequality, 
\begin{align}
\| \hat{x}[\cdot] \|_{\partial \partition} & \geq \| x[\cdot] \|_{\partial \partition} - \| u[\cdot] \|_{\partial \partition}   =  \| x[\cdot] \|_{\rm TV}\!-\!\| u[\cdot] \|_{\partial \partition}, \nonumber
\end{align} 
and, since $\| \hat{x}[\cdot] \|_{\rm TV}  = \| \hat{x}[\cdot] \|_{\partial \partition} + \|\hat{x}[\cdot] \|_{\edges \setminus \partial \partition}$, in turn
\vspace*{0mm}
\begin{align}
\| \hat{x}[\cdot] \|_{\rm TV}  & \!=\! \|\hat{x}[\cdot] \|_{\partial \partition}\!+\!\|\hat{x}[\cdot]  \|_{\edges \setminus \partial \partition} 
\stackrel{\eqref{equ_supp_vx_edgesupport}}{=}\|\hat{x}[\cdot] \|_{\partial \partition} + u[\cdot] \|_{\edges \setminus \partial \partition}  \nonumber \\ 
 &  \hspace*{-8mm} \geq  \| x[\cdot] \|_{\rm TV}\!-\!\| u[\cdot]  \|_{\partial \partition}\!+\!\|u[\cdot] \|_{\edges \setminus \partial \partition} \stackrel{\eqref{equ_NSP1}}{>} \|x[\cdot] \|_{\rm TV}.  \nonumber \\[0mm]
 \nonumber 
\end{align}
Thus, we have shown that any graph signal $\hat{x}[\cdot]$ which is different from the true underlying graph signal $x[\cdot]$ but 
coincides with it at all sampled nodes $i \in \samplingset$, must have a larger TV norm than the true signal $x[\cdot]$ and 
therefore cannot be optimal for the problem \eqref{equ_semi_sup_learning_problem}.
\end{proof}
The next result extends Lemma \ref{lem_NSP1} to graph signals $\xsig \in \graphsigs$ which 
are not exactly clustered, but which can be well approximated by 
a clustered signal of the form \eqref{equ_def_clustered_signal_model}. 
\begin{lemma} 
\label{thm_approx_sparse_stable_result}
Consider a data graph $\graph$ and a fixed partition $\partition=\{\cluster_{1},\ldots,\cluster_{|\partition|}\}$ 
of its nodes into disjoint clusters $\cluster_{l} \subseteq \nodes$. We observe a graph signal $\vx \!\in\! \graphsigs$ at the sampling 
set $\samplingset \!\subseteq\! \nodes$. If \eqref{equ_NSP1} holds for $\edgesupport\!=\!\partial \partition$ and $\kappa\!=\!2$, any solution $\hat{x}[\cdot]$ of 
\eqref{equ_semi_sup_learning_problem} satisfies
\vspace*{-2mm}
\begin{equation} 
\label{equ_bound_stability}
\| x[\cdot]\!-\!\hat{x}[\cdot] \|_{\rm TV} \!\leq\! 6 \min_{a_{l} \in \mathbb{R}}  \big\| x[\cdot]  - \sum_{l = 1}^{|\partition|} a_{l} \mathcal{I}_{\cluster_{l}}[\cdot] \big\|_{\rm TV}.
\vspace*{-2mm}
\end{equation} 
\end{lemma}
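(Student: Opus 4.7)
The plan is to mimic the argument in the proof of Lemma \ref{lem_NSP1}, but now track a quantitative slackness induced by the approximation error rather than deriving a strict inequality. Let $x^{*}[\cdot]\!=\!\sum_{l} a^{*}_{l}\, \mathcal{I}_{\cluster_{l}}[\cdot]$ denote a minimizer of the right-hand side of \eqref{equ_bound_stability}, write $\eta[\cdot]\!\defeq\!x[\cdot]-x^{*}[\cdot]$ (so that $\|\eta[\cdot]\|_{\rm TV}$ is the approximation error), and set $u[\cdot]\!\defeq\!\hat{x}[\cdot]-x[\cdot]$. Because $x[\cdot]$ is feasible for \eqref{equ_semi_sup_learning_problem} and $\hat{x}[i]=x[i]$ on $\samplingset$, $u[\cdot]\!\in\!\mathcal{K}(\samplingset)$, which is the input required by the stable analysis nullspace property \eqref{equ_NSP1}.

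Next, I would split the TV norm using the decomposition $\hat{x}[\cdot]=x^{*}[\cdot]+\eta[\cdot]+u[\cdot]$. Since $x^{*}[\cdot]$ is constant on each cluster, it contributes nothing on $\edges\setminus\partial\partition$; hence
\begin{align*}
\|\hat{x}[\cdot]\|_{\edges\setminus\partial\partition} &\geq \|u[\cdot]\|_{\edges\setminus\partial\partition} - \|\eta[\cdot]\|_{\edges\setminus\partial\partition},\\
\|\hat{x}[\cdot]\|_{\partial\partition} &\geq \|x^{*}[\cdot]\|_{\partial\partition} - \|\eta[\cdot]\|_{\partial\partition} - \|u[\cdot]\|_{\partial\partition},
\end{align*}
by the (reverse) triangle inequality. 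On the other hand, feasibility of $x[\cdot]$ together with the triangle inequality applied to $x[\cdot]=x^{*}[\cdot]+\eta[\cdot]$ yields
\begin{equation*}
\|\hat{x}[\cdot]\|_{\rm TV} \leq \|x[\cdot]\|_{\rm TV} \leq \|x^{*}[\cdot]\|_{\partial\partition} + \|\eta[\cdot]\|_{\rm TV}.
\end{equation*}
Adding the two lower bounds and combining with this upper bound, the $\|x^{*}[\cdot]\|_{\partial\partition}$ terms cancel and I obtain the cone-type inequality
\begin{equation*}
\|u[\cdot]\|_{\edges\setminus\partial\partition} \leq \|u[\cdot]\|_{\partial\partition} + 2\|\eta[\cdot]\|_{\rm TV}.
\end{equation*}

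Finally, I would plug in the stable nullspace inequality \eqref{equ_NSP1} with $\edgesupport=\partial\partition$ and $\kappa=2$, giving $2\|u[\cdot]\|_{\partial\partition}\leq \|u[\cdot]\|_{\edges\setminus\partial\partition}\leq\|u[\cdot]\|_{\partial\partition}+2\|\eta[\cdot]\|_{\rm TV}$, hence $\|u[\cdot]\|_{\partial\partition}\leq 2\|\eta[\cdot]\|_{\rm TV}$ and consequently $\|u[\cdot]\|_{\edges\setminus\partial\partition}\leq 4\|\eta[\cdot]\|_{\rm TV}$. Summing the two pieces yields $\|u[\cdot]\|_{\rm TV}\leq 6\|\eta[\cdot]\|_{\rm TV}$, which is \eqref{equ_bound_stability} since $\eta[\cdot]$ realises the minimum.

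The argument is essentially bookkeeping; no single step is deep. The main place to be careful is the splitting across $\partial\partition$ and $\edges\setminus\partial\partition$: one must exploit that the clustered surrogate $x^{*}[\cdot]$ is supported (as a TV signal) exactly on $\partial\partition$, so that one can cancel the large term $\|x^{*}[\cdot]\|_{\partial\partition}$ on both sides and isolate a cone inequality for $u[\cdot]$ with an additive $2\|\eta[\cdot]\|_{\rm TV}$ slack. The constant $6$ then arises automatically from the choice $\kappa=2$; any $\kappa>1$ would give a finite constant $(2\kappa+2)/(\kappa-1)$ in place of $6$, which explains why the theorem statement fixes $\kappa=2$.
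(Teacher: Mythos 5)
Your proof is correct and follows essentially the same stable-nullspace-property argument as the paper: optimality of $\hat{x}[\cdot]$, a split of the TV norm over $\partial\partition$ and $\edges\setminus\partial\partition$, triangle inequalities yielding a cone condition on the error, and then an application of \eqref{equ_NSP1} with $\kappa=2$. The only cosmetic difference is that you introduce the best clustered approximation $x^{*}[\cdot]$ at the outset and carry $\|\eta[\cdot]\|_{\rm TV}$ through the cone inequality, whereas the paper first establishes $\|\hat{x}[\cdot]-x[\cdot]\|_{\rm TV}\leq 6\,\|x[\cdot]\|_{\edges\setminus\partial\partition}$ and only in the final step observes that $\|x[\cdot]\|_{\edges\setminus\partial\partition}$ lower-bounds the clustered approximation error.
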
 
\begin{proof}
The argument closely follows the proof of \cite[Theorem 8]{KabRau2015}. 
First note that any solution $\hat{x}[\cdot]$ of \eqref{equ_semi_sup_learning_problem} obeys 
\begin{equation}
\label{equ_D_hat_x_1_norm_smaller}
\|  \hat{x}[\cdot] \|_{\rm TV} \leq \| x[\cdot] \|_{\rm TV}, 
\end{equation}  
since $x[\cdot]$ is trivially feasible for \eqref{equ_semi_sup_learning_problem}. 
From \eqref{equ_D_hat_x_1_norm_smaller}, we have  
\begin{equation}
\label{equ_proof_stable_NSP_123}
\hspace*{-3mm}\| \hat{x}[\cdot] \|_{\edgesupport} \!+\!\| \hat{x}[\cdot] \|_{\edges \setminus \edgesupport}  \!\leq\! \|  x[\cdot]  \|_{\edgesupport}\!+\!\| x[\cdot]  \|_{\edges \setminus \edgesupport}. 
\end{equation}  
Since $\hat{x}[\cdot]$ is feasible for \eqref{equ_semi_sup_learning_problem}, i.e., $\hat{x}[i] = x[i]$ for every sampled node $i \in \samplingset$, 
the difference $v[\cdot] \defeq \hat{x}[\cdot]\!-\!x[\cdot]$ belongs to $\mathcal{K}(\samplingset)$ (cf.\ \eqref{equ_def_kernel_sampling_set}). 
Applying the triangle inequality to \eqref{equ_proof_stable_NSP_123}, 
%\begin{align} 
%& \|  x[\cdot]  \|_{\edgesupport}\!-\!\|  v[\cdot]  \|_{\edgesupport}\!-\!\|  x[\cdot]  \|_{\edges \setminus \edgesupport}\!+\!\| v[\cdot]  \|_{\edges \setminus \edgesupport} \leq \nonumber \\ 
%&   \|   x[\cdot]  \|_{\edgesupport}\!+\!\|   x[\cdot]  \|_{\edges \setminus \edgesupport}, 
%\end{align} 
%and, in turn, 
\begin{align} 
\label{equ_DM_vv_mD_vv}
\| v[\cdot] \|_{\edges \setminus \edgesupport}  \leq \| v[\cdot]  \|_{\edgesupport} + 2 \| x[\cdot] \|_{\edges \setminus \edgesupport} . 
\end{align} 
Combining \eqref{equ_DM_vv_mD_vv} with \eqref{equ_NSP1} (for the signal $u[\cdot] = v[\cdot]$), 
\begin{equation}
\label{equ_offsupport_leq_fourtimes}
\|  v [\cdot] \|_{\edges \setminus \edgesupport}  \leq 4 \|  x[\cdot]  \|_{\edges \setminus \edgesupport}. 
\end{equation} 
Using \eqref{equ_NSP1} again, 
\begin{align}
\| x[\cdot]\!-\!\hat{x}[\cdot] \|_{\rm TV} & = \|  v[\cdot] \|_{\rm TV}  = \|  v[\cdot]  \|_{\edgesupport}\!+\!\|  v[\cdot] \|_{\edges \setminus \edgesupport}  \nonumber \\
& \stackrel{\eqref{equ_NSP1}}{\leq} (3/2) \!\|  v[\cdot]  \|_{\edges \setminus \edgesupport} \stackrel{\eqref{equ_offsupport_leq_fourtimes}}{\leq} 6  \|  x[\cdot]  \|_{\edges \setminus \edgesupport}. \nonumber
\end{align}
For any clustered graph signal $x_{\rm c}[\cdot]$ of the form $x_{\rm c}[i]= \sum_{l = 1}^{|\partition|} a_{l} \mathcal{I}_{\cluster_{l}}[i]$, 
we have $x_{c}[i]\!-\!x_{c}[j]=0$ for any $\{i,j\} \in \edges \setminus \edgesupport$ (note that $\edgesupport=\partial \partition$)  and, in turn, 
\begin{align}
\|x[\cdot]\!+\!x_{\rm c}[\cdot]\|_{\rm TV} &=\!\|\xsig\!+\!x_{\rm c}[\cdot]  \|_{\edges \setminus \edgesupport}+\|\xsig\!+\!\xsig_{c} \|_{\edgesupport}  \nonumber \\
& \geq \| \xsig\!+\!\xsig_{c}  \|_{\edges \setminus \edgesupport} = \| \xsig \|_{\edges \setminus \edgesupport}.  \nonumber%\\[-10mm]
%\nonumber
\end{align} 
\end{proof} 
Let us now render Lemma \ref{lem_NSP1} and Lemma \ref{thm_approx_sparse_stable_result} for clustered 
graph signals $\xsig$ of the form \eqref{equ_def_clustered_signal_model} by stating a condition on the graph 
topology and sampling set $\samplingset$ which ensures \eqref{equ_NSP1}. 

\begin{lemma} 
\label{lem_NNSP_samplingset_suff_recovery}
If a sampling set $\samplingset$ satisfies NNSP-$\partition$, then
it also satisfies the stable analysis nullspace property \eqref{equ_NSP1}.  
\end{lemma}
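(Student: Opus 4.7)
The plan is to convert the network flow supplied by NNSP-$\partition$ into the analysis nullspace bound \eqref{equ_NSP1} via a discrete summation-by-parts identity (integration-by-parts on the graph). This is the same flow/potential duality that underlies max-flow/min-cut style recovery guarantees in compressed sensing: the potentials will be the coordinates of an arbitrary element $u[\cdot]\!\in\!\mathcal{K}(\samplingset)$, and the flow will be the one furnished by NNSP-$\partition$ after the signature is chosen adversarially against $u[\cdot]$.

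First, I would fix $u[\cdot]\!\in\!\mathcal{K}(\samplingset)$ and, for each boundary edge $e\!\in\!\partial \partition$, set the signature $\sigma_{e}\!\defeq\!\operatorname{sgn}(u[e^{+}]\!-\!u[e^{-}])$ (with either sign when $u[e^{+}]\!=\!u[e^{-}]$). Applying NNSP-$\partition$ to this particular signature yields a flow $f[\cdot]$ whose demands $g[\cdot]$ vanish off $\samplingset$, whose boundary values equal $f[e]\!=\!\kappa\sigma_{e}W_{e}$, and whose non-boundary values obey $|f[e]|\!\leq\!W_{e}$. I would then invoke the summation-by-parts identity
$$\sum_{i \in \nodes} g[i]\,u[i] \;=\; \sum_{e \in \edges} f[e]\bigl(u[e^{+}]\!-\!u[e^{-}]\bigr),$$
which follows from the conservation law \eqref{equ_define_demand_now} by attributing, for every edge $e$, the contributions $+f[e]$ to $g[e^{+}]$ and $-f[e]$ to $g[e^{-}]$ and then reordering the double sum. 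The left-hand side is zero because $g[i]\!=\!0$ for $i\!\notin\!\samplingset$ while $u[i]\!=\!0$ for $i\!\in\!\samplingset$.

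Finally, I would split the right-hand side into its boundary and non-boundary parts. The boundary part collapses, thanks to the sign choice, to $\kappa\|u[\cdot]\|_{\partial \partition}$; the non-boundary part is bounded in absolute value by $\|u[\cdot]\|_{\edges \setminus \partial \partition}$ via the capacity bound $|f[e]|\!\leq\!W_{e}$ and the triangle inequality. Since the total sum vanishes, the two parts cancel to give $\kappa\|u[\cdot]\|_{\partial \partition}\!\leq\!\|u[\cdot]\|_{\edges \setminus \partial \partition}$, which is precisely \eqref{equ_NSP1} with the same constant $\kappa\!>\!1$ as provided by NNSP-$\partition$. The main obstacle is not conceptual—once the summation-by-parts identity is in place, everything collapses—but is rather a bookkeeping subtlety: one must keep the head/tail orientation convention for $\mathcal{N}^{\pm}$ consistent throughout the derivation of the identity, and reconcile the signed boundary values $\kappa\sigma_{e}W_{e}$ with the formally non-negative range declared for flows in the definition (i.e., interpret $f[e]$ as a signed quantity bounded by the capacity $W_{e}$ when $\sigma_{e}\!=\!-1$).
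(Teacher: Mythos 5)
Your proposal is correct and follows essentially the same route as the paper: the paper also chooses the flow's signature to match the sign of $u[e^{+}]-u[e^{-}]$ on the boundary edges and then invokes Tellegen's theorem (on an augmented graph with an extra node absorbing the demands at the sampled nodes), which is exactly the summation-by-parts identity you write out explicitly. Your version is simply a more self-contained rendering of the same flow/potential duality, replacing the augmented-graph step by the direct observation that $\sum_{i}g[i]u[i]=0$ since $g$ is supported on $\samplingset$ where $u$ vanishes.
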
 
\begin{proof} 
Consider a signal $u[\cdot] \!\in\! \mathcal{K}(\samplingset)$ which vanishes at all sampled nodes, i.e., 
%\begin{equation}
%\label{equ_sig_in_kernel_samplingset}
$u[i] = 0 \mbox{ for all } i \in  \samplingset$.
%\end{equation} 
We will now show that $\| u[\cdot] \|_{\edges \setminus \partial\partition} \geq 2 \| u[\cdot] \|_{\partial\partition}$.

Let us assume that for each boundary edge $e \in \partial \partition$, the flow $f[\cdot]$ in Definition \ref{def_sampling_set_resolves} has 
the same sign as $u[e^{+}]- u[e^{-}]$. We are allowed to assume this since according to Definition \ref{def_sampling_set_resolves}, 
if there exists a flow with $f[e'] > 0$ for a boundary edge $e' \in \partial \partition$, there is another flow $\tilde{f}[\cdot]$ with $\tilde{f}[e'] = - f[e']$ 
for the same edge $e' \in \partial \partition$, but otherwise identical to $f[\cdot]$, i.e., $ \tilde{f}[e]=f[e]$ for all $e \in \edges \setminus \{ e' \}$. 

Next, we construct an augmented graph $\graph'$ by adding an extra node $s$ to the data graph $\graph$ which is connected to all sampled nodes $i \in \samplingset$ 
via an edge $e_{i} = \edge{s}{i}$ which is oriented such that $e_{i}^{+} =s$. We assign to each edge $e_{i} = \edge{s}{i}$ the 
flow $f[e_{i}] = g[i]$ (cf. \eqref{equ_define_demand_now}). It can be verified easily that the flow over the augmented graph has zero demands for all nodes. Thus, 
we can apply Tellegen's theorem \cite{Tellegen1952} to obtain $\| u[\cdot] \|_{\edges \setminus \partial\partition} \geq 2 \| u[\cdot] \|_{\partial\partition}$. 

\vspace*{-4mm}
\end{proof} 

We obtain Theorem \ref{main_thm_exact_sparse} by combining Lemma \ref{lem_NNSP_samplingset_suff_recovery} with Lemma \ref{lem_NSP1}. 
In order to verify Theorem \ref{main_thm_approx_sparse} we note that, by Lemma \ref{lem_NNSP_samplingset_suff_recovery}, the NNSP according to 
Definition \ref{def_sampling_set_resolves} implies the stable nullspace property \eqref{equ_NSP1} for $\edgesupport=\partial \partition$. 
Therefore, we can invoke Lemma \ref{thm_approx_sparse_stable_result} 
to reach \eqref{equ_result_stable_NNSP}.

\bibliographystyle{abbrv}
%\bibliographystyle{plain}
%\bibliography{Sampta2017}
%\bibliography{NNSP_JMLR}
\bibliography{SLPBib}

\end{document}